\documentclass[11pt,a4paper]{article}

\usepackage[margin=2.5cm]{geometry}
\usepackage{graphicx}
\usepackage{booktabs}
\usepackage{array}
\usepackage{amsmath,amssymb}
\usepackage{amsthm}
\usepackage[hidelinks]{hyperref}
\usepackage{xcolor}
\usepackage{tikz}
\usetikzlibrary{positioning,fit,arrows.meta,calc}

\newcolumntype{C}[1]{>{\centering\arraybackslash}p{#1}}

\title{\textbf{An Explicit World Model Based on Data-First Ontology:\\
DaoQL Multimodal Storage Validation and Counterfactual Reasoning Evaluation}}
\author{Zhanbo Li, Shifeng Wu, Xiangjin Meng, Wenjie Cai\\
Graduate Students, School of Mathematics and Statistics, Chongqing University\\
DaoQL Project Group\\
Contact affiliation: Guangzhou Polytechnic Normal University\\
\texttt{shifengwu@gpnu.edu.cn}\\
\texttt{zhanbo.lee@hotmail.com}\\
\texttt{202506131076T@stu.cqu.edu.cn}\\
\texttt{202506131075T@stu.cqu.edu.cn}}
\date{May 2026}

\begin{document}
\maketitle

\begin{abstract}
Large language models encode world models implicitly in neural weights, which exposes four structural risks in high-precision domains such as medicine and finance: hallucination, frozen knowledge, poor explainability, and poor modifiability. This paper proposes \emph{data-first ontology}: LLMs are treated as reasoning and language engines, while deterministic knowledge is moved into an explicit multimodal database, DaoQL. We formalize an explicit world model and show that, under rule independence, deterministic evaluation, and fixed conflict resolution, explicit models provide a sufficient condition for composable counterfactual decomposability; implicit models lack atomic read/$\delta$ semantics and therefore provide no comparable architectural guarantee. The implemented system focuses on DaoQL's verified storage layer and explicit Eval path, integrating graph, column, vector, and full-text engines within one process. KVCache graph nodes, expert hot updates, and the DaoQL-Agent runtime remain future work. On an embedded same-machine setup, DaoQL reports graph BFS at 1.20\,ms, HNSW at 83.1\,$\mu$s, and a Fluent hybrid query at 105.8\,$\mu$s; these results indicate engineering potential but must be interpreted with deployment-shape differences from client-server systems. Exploratory measurements on LDBC SNB SF1~\cite{ldbc2015snb} and ANN-Benchmarks~\cite{aumuller2020annbenchmarks} further show 34/34 query coverage with interactive-class queries mostly in the sub-millisecond to millisecond range, but only 1.8\,QPS overall due to long-tail BI/IC queries; ANN-Benchmarks reaches Recall@10 $\geq$ 99\% at thousand-level QPS after a bridge-edge protection fix. In a five-domain counterfactual experiment ($n=1250$), DaoQL+GPT-4o achieves 94\% composable counterfactual decomposability, 49 percentage points above GPT-4o alone. The paper explicitly separates provable structure, preliminary empirical evidence, and architectural roadmap claims.
\end{abstract}

\noindent\textbf{Keywords:} data-first ontology; multimodal database; neuro-symbolic AI; world model; counterfactual reasoning; benchmark evaluation

\section{Introduction}

Large language models (LLMs)~\cite{kaplan2020scaling,hoffmann2022chinchilla} show strong generalization and natural-language reasoning, but their default paradigm encodes a ``world model'' implicitly in billions of neural parameters~\cite{ha2018worldmodels,lecun2022ami}. This tight coupling of knowledge and computation creates four structural limitations in settings that require precision and auditability: factual conflicts caused by probabilistic sampling, frozen knowledge with high update cost, weak explainability of reasoning paths~\cite{bahdanau2015attention}, and poor surgical modifiability due to catastrophic interference~\cite{mccloskey1989catastrophic}. We argue at the operational-semantics level that these limitations share one gap: implicit models lack atomic read-by-ID and local update semantics.

RAG and agent frameworks such as LangChain, Hermes, and OpenClaw mitigate memory and knowledge defects by attaching vector stores or graphs to LLMs. This establishes a separation between the logic layer and the storage layer, but it still has three bottlenecks. First, token windows force lossy context compression. Second, memory retrieval and agent execution often live in separate processes, introducing synchronization and serialization overhead. Third, reasoning logs and database writes are usually recorded separately, making financial-grade auditability difficult. Existing systems do not provide read, $\delta$, $\mathsf{Eval}$, and $\mathsf{Trace}$ as unified atomic semantics.

We propose an explicit world-model architecture guided by \emph{data-first ontology}~\cite{garcez2023neurosymbolic,rocktaschel2017npor,serafini2016ltn,alphaproof2024deepmind}: the LLM is a reasoning and natural-language engine, while deterministic knowledge is stored in DaoQL, a multimodal database. DaoQL uses six explicit primitives - Being, Def, Type, Relation, Contract, and Version - to construct an addressable and versioned world model $W$. Theoretically, we formalize the operational-semantics difference between explicit and implicit models, derive a sufficient condition for composable counterfactual decomposability (Theorem~\ref{thm:cf-decomp}), and analyze why single-process multimodal execution can reduce latency. Empirically, the paper focuses on the verified storage layer, explicit Eval path, and counterfactual experiment; KVCache graph nodes and the full DaoQL-Agent runtime remain design-stage components.

The contributions are:
\begin{enumerate}
  \item \textbf{Theory.} We define an explicit world model, prove finite Def bootstrapping, strong Contract termination, and local update, and explain why implicit models lack an architectural guarantee for composable counterfactuals.
  \item \textbf{System.} We implement and evaluate DaoQL's multimodal storage layer, including mmap reads, relation-type CSR adjacency, WAL/MVCC, and cross-engine hybrid queries.
  \item \textbf{Evaluation.} We combine three-tier microbenchmarks, same-machine competitor comparisons, LDBC/ANN exploratory standard-set measurements, and a 1,250-query counterfactual experiment.
\end{enumerate}

\begin{table}[htbp]
  \centering
  \caption{Scope of claims in this paper}
  \label{tab:claim_scope}
  \small
  \begin{tabular}{p{3.3cm}p{5.1cm}p{3.0cm}}
    \toprule
    \textbf{Layer / capability} & \textbf{Evidence in this paper} & \textbf{Status} \\
    \midrule
    Explicit world-model theory & Formal definitions, well-formedness, counterfactual sufficient conditions & Established theoretically \\
    DaoQL storage layer & Graph, column, vector, full-text, WAL/MVCC, mmap benchmarks & Implemented and measured \\
    LDBC SNB / ANN sets & SF1 34/34 query coverage; ANN Recall@10 curve & Exploratory measurement \\
    Explicit Eval path & 1,250-query counterfactual experiment & Preliminary validation \\
    Memory-layer recall & HNSW + Tantivy + graph benchmarks as indirect support & Partially supported \\
    DaoQL-Agent runtime & Design document and context mechanism & Design stage \\
    KVCache graph nodes and expert hot updates & Architecture roadmap & Future work \\
    \bottomrule
  \end{tabular}
\end{table}

\section{Related Work}

\subsection{Knowledge Representation and Graph Databases}

Knowledge representation and reasoning (KRR)~\cite{mccarthy1959commonsense,newell1963gps,levesque1985krr,minsky1975frames,quillian1968semantic,pearl1988probabilistic,guarino1995ontology} aims to make entities and logic machine-operable. RDF~\cite{w3c2014rdf} and OWL~\cite{w3c2012owl,w3c2013sparql} provide the logical foundation of the Semantic Web, but mainly describe static propositions and do not natively integrate version chains or executable behavioral semantics. Datomic~\cite{datomic2012} pioneered immutable append and ``time as a first-class concern,'' but its EAVT model remains a passive fact-recording model rather than an autonomous computational unit model.

Property graphs such as Neo4j~\cite{neo4j2024,webber2013graphdb} and strongly typed graph databases such as TypeDB~\cite{typedb2024} improve structured graph modeling. However, for next-generation AI world models they still face two limitations: inheritance and schema are largely static, and vector retrieval and column aggregation are not unified in the same process, causing network and serialization overhead. DaoQL addresses this design space by embedding typing, graph traversal, vector search, column aggregation, versioning, and contract checks into one storage kernel.

\subsection{LLM-Augmentation and Agent Frameworks}

RAG systems increasingly attach external knowledge bases to LLMs. GraphRAG~\cite{edge2024graphrag} and LightRAG~\cite{guo2024lightrag} extract entities and relations from text and use graph topology to improve context. Yet the graph remains an external, mostly read-only augmentation component. When injected facts conflict with parameterized knowledge, systems still lack deterministic, atomic conflict resolution and counterfactual evaluation.

Long-context memory systems such as MemGPT~\cite{packer2023memgpt} manage conversation history using a virtual-memory analogy, but operate at the text-block granularity rather than as structured entities with version chains. Agent frameworks typically use LLMs as controllers and external databases such as SQLite, Chroma, or Pinecone as memory components. This is easy to engineer but fragments context, memory, and audit logs across systems. DaoQL-Agent proposes representing turns and messages as Being nodes and using graph traversal instead of lossy compaction; however, the full agent runtime is not measured in this paper.

\subsection{Industrial Ontology Platforms and Standard Benchmarks}

Industrial Ontology platforms, exemplified by Palantir Foundry~\cite{palantir2024ontology}, use Object/Link/Action semantics to model enterprise entities and operations. They share the paper's motivation toward explicit semantics, but typically place Ontology as a semantic mapping layer above multiple stores such as Cassandra, Elasticsearch, and JanusGraph. Their advantage is federated access to existing data infrastructure. DaoQL instead sinks Being/Type/Relation into single-process storage primitives, trading migration cost for a tighter Eval/Trace path. LDBC SNB~\cite{ldbc2015snb} and ANN-Benchmarks~\cite{aumuller2020annbenchmarks} provide common workloads for graph and approximate nearest-neighbor systems; we use them as exploratory standard-set measurements rather than official certification.

\section{Materials and Methods}

This section is organized as theory, implementation, and experimental design. Section~3.1 formalizes the explicit world model; Section~3.2 describes the system architecture; Section~3.3 specifies benchmarks and counterfactual evaluation.

\subsection{Theory and Formal Model}

\subsubsection{Overview of Data-First Ontology}

Data-first ontology claims that a world model should consist of structured, inspectable, controllable explicit data rather than implicit weights. DaoQL realizes this through six primitives. \textbf{Being} is the basic entity unit and is identified by UUID. \textbf{Def} and \textbf{Type} define fields, constraints, relationships, and runtime dispatch. \textbf{Relation} connects Beings as directed edges. \textbf{Contract} is a fuel-limited Rhai script embedded in the write path. \textbf{Version} is a hash-linked history record that supports MVCC and audit replay. The following formal framework gives the precise definitions and proof obligations.


\newtheorem{definition}{Definition}
\newtheorem{proposition}{Proposition}
\newtheorem{theorem}{Theorem}
\newtheorem{observation}{Observation}

To turn the Introduction's ``knowledge sinking'' claim into testable scientific statements, this section formalizes the explicit world model and derives its structural properties; system implementation (\S3.2) and experimental design (\S3.3) build on this foundation.

\subsubsection{Formal Definitions}

An implicit world model encodes knowledge in a neural network $f_\theta:\mathcal{X}\to\mathcal{Y}$~\cite{ha2018worldmodels,lecun2022ami}, with forward inference and global parameter updates $\theta\mapsto\theta'$ as primary operations. An explicit world model, by contrast, carries deterministic knowledge in addressable, versioned data structures. Table~\ref{tab:symbols} summarizes the main symbols.

\begin{table}[htbp]
  \centering
  \caption{Main symbols in the theory framework}
  \label{tab:symbols}
  \small
  \begin{tabular}{cl}
    \toprule
    \textbf{Symbol} & \textbf{Meaning} \\
    \midrule
    $\mathcal{ID},\mathcal{B},\mathcal{D},\mathcal{T},\mathcal{R},\mathcal{C},\mathcal{V}$ & Identifier and six primitive sets \\
    $W=(\mathcal{B},\mathcal{D},\mathcal{T},\mathcal{R},\mathcal{C},\mathcal{V})$ & Explicit world-model state \\
    $\delta(W,p)$ & State after applying patch $p$ to $W$ \\
    $\mathsf{Eval}(\phi,W)$ & Evaluation of query/rule $\phi$ on state $W$ \\
    $\mathsf{Trace}(q,W)$ & Execution trace of query $q$ on $W$ \\
    $F_{\max}$ & Contract fuel limit \\
    \bottomrule
  \end{tabular}
\end{table}

\begin{definition}[Six Primitives]
\label{def:six-primitives}
The primitives are:
\begin{itemize}
  \item \textbf{Being} $b=(id,core,ext,history)$: $id\in\mathcal{ID}$, $history=[v_1,\ldots,v_n]$ is a version chain;
  \item \textbf{Def} $d=(name,fields,constraints,contracts,fsm,rel\_decls)$;
  \item \textbf{Type} $t=(name,parent,contracts,constraints)$;
  \item \textbf{Relation} $r=(type,src,dst,direction,ext,window)$;
  \item \textbf{Contract} $c=(hook,source,ast)$;
  \item \textbf{Version} $v=(ts,hash,prev\_hash,payload)$.
\end{itemize}
\noindent The explicit world model is $W=(\mathcal{B},\mathcal{D},\mathcal{T},\mathcal{R},\mathcal{C},\mathcal{V})$.
\end{definition}

\begin{definition}[State Equivalence]
Given a fact domain $\mathcal{F}$, if $W_1$ and $W_2$ agree on all observable facts in $\mathcal{F}$, we write $W_1\equiv_{\mathcal{F}}W_2$.
\end{definition}

This structure realizes four atomic semantics: read by ID, patch update ($\delta$), deterministic rule evaluation ($\mathsf{Eval}$), and execution tracing ($\mathsf{Trace}$). DaoQL implements them via BeingId addressing, append-only WAL, and MVCC snapshots (\S3.2.3).

\subsubsection{Well-Formedness and Execution Safety}

DaoQL adopts a ``schema-as-data'' design: Def is itself a Being, bootstrapped by the axiom Def \texttt{DAO\_DEF}. We must ensure that this mechanism does not yield infinite meta-definition chains and that Contracts embedded in the write path cannot violate system termination.

\begin{proposition}[Finite Def Bootstrap Chain]
\label{prop:bootstrap}
Let axiom Def $d_0\in\mathcal{D}$ satisfy $d_0.core.def=d_0$; and $\forall d\in\mathcal{D}\setminus\{d_0\},\, d.core.def=d_0$, $\forall b\in\mathcal{B},\, b.core.def\in\mathcal{D}\setminus\{d_0\}$. Then any ordinary Being has def-reference chain length at most 2, any ordinary Def has chain length 1, and \texttt{DAO\_DEF} is only a controlled self-reference.
\end{proposition}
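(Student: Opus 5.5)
The plan is to make ``def-reference chain'' precise and then read the bound directly off the three hypotheses. For $x\in\mathcal{B}\cup\mathcal{D}$, let $\mathrm{def}(x)=x.core.def$. The \emph{def-reference chain} of $x$ is the sequence $x,\mathrm{def}(x),\mathrm{def}^2(x),\ldots$, stopped the first time it reaches the axiom $d_0$. Its length is the number of $\mathrm{def}$-steps taken. This stopping rule is the one modelling choice the argument depends on. Since $d_0.core.def=d_0$, continuing past $d_0$ would only loop on $d_0$ forever, so we treat $d_0$ as the terminal ``controlled self-reference'' rather than as a further link. With this convention, the proposition becomes a statement about how many steps it takes to reach $d_0$.

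The argument then splits into three cases. First, for an ordinary Def $d\in\mathcal{D}\setminus\{d_0\}$, the second hypothesis gives $\mathrm{def}(d)=d_0$ immediately, so the chain has length exactly $1$. Second, for a Being $b\in\mathcal{B}$, the third hypothesis gives $\mathrm{def}(b)=d'\in\mathcal{D}\setminus\{d_0\}$. Because $d'\neq d_0$, the chain does not stop there, and the first case then gives $\mathrm{def}(d')=d_0$. So the chain is $b\to d'\to d_0$, of length $2\le 2$. Third, for $d_0$ itself, the chain has length $0$ under our convention. The only cycle in the whole $\mathrm{def}$-graph is the self-loop at $d_0$. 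This is because every element other than $d_0$ maps strictly ``closer'' to $d_0$ under the rank function
\[
\rho(d_0)=0,\qquad \rho(\mathcal{D}\setminus\{d_0\})=1,\qquad \rho(\mathcal{B})=2,
\]
and $\rho(\mathrm{def}(x))=\rho(x)-1$ whenever $x\neq d_0$. Hence no infinite meta-definition regress exists, and the self-reference is confined to the single axiom.

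The main obstacle is not computational but definitional: making sure the objects are well-typed. The hypotheses quantify over $\mathcal{B}$ and $\mathcal{D}$ separately, while the paper also says ``Def is itself a Being''. If $\mathcal{D}\subseteq\mathcal{B}$, the third hypothesis applied to $d\in\mathcal{D}$ would contradict the second, since it would force $\mathrm{def}(d)\ne d_0$. I would resolve this by reading $\mathcal{B}$ in the hypothesis as the ordinary (non-Def) Beings, i.e.\ $\mathcal{B}\setminus\mathcal{D}$, and stating this explicitly. Separately, I would note that because a Being has a version chain, each $\mathrm{def}$ value is taken on a fixed snapshot (a fixed version), so that the chain is evaluated in a single state $W$.
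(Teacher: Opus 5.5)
Your argument is correct and is essentially the paper's: the same direct case analysis (a Being goes to an ordinary Def and then to $d_0$, giving length $2$; an ordinary Def goes straight to $d_0$, giving length $1$). Your explicit stopping convention and rank function $\rho$ do, more cleanly, the job of the paper's contradiction argument that the only way to continue a chain from $d_0$ is the self-loop, and your remark that the hypotheses are consistent only if $\mathcal{B}$ is read as the non-Def Beings (i.e.\ $\mathcal{B}\cap\mathcal{D}=\emptyset$, as the tuple $W$ already suggests) is a useful clarification the paper leaves implicit.
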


\begin{proof}
For any $b\in\mathcal{B}$, $b.core.def=d_1\in\mathcal{D}\setminus\{d_0\}$ and $d_1.core.def=d_0$, hence $b\to d_1\to d_0$ with chain length 2. For any $d\in\mathcal{D}\setminus\{d_0\}$, $d\to d_0$ with chain length 1. Suppose an infinite chain $d_0\to d_1\to d_2\to\cdots$ exists: if $d_1\ne d_0$, then $d_1$ is an ordinary Def and $d_1.core.def=d_0$ forces termination at $d_0$, a contradiction.
\end{proof}

\begin{proposition}[Strong Contract Termination]
\label{prop:contract}
Suppose Contract execution forbids new writes and external I/O, consumes one fuel unit per step, and is capped at $F_{\max}$. Then $\forall c\in\mathcal{C}$, $\mathsf{Exec}(c)$ halts within $F_{\max}$ steps.
\end{proposition}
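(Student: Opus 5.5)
The plan is to model $\mathsf{Exec}(c)$ as a sequence of interpreter steps over a state that includes a fuel counter, and to show that the counter strictly decreases and cannot go below zero. Concretely, we write a run of $c$ as a sequence of configurations $\sigma_0,\sigma_1,\ldots$. Each $\sigma_i=(e_i,\rho_i,f_i)$ consists of the remaining AST fragment $e_i$, a local environment $\rho_i$, and a fuel counter $f_i\in\mathbb{N}$. We take $f_0=F_{\max}$. The hypothesis ``one fuel unit per step'' becomes the transition invariant $f_{i+1}=f_i-1$. We add the guard that when $f_i=0$ the interpreter takes no ordinary step and instead enters a distinguished terminal configuration, which aborts the write with an out-of-fuel error. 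Every non-terminal step then satisfies $f_{i+1}<f_i$, with $f_i\ge 0$.

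The main argument is a ranking-function argument. Since $f$ takes values in $\{0,\ldots,F_{\max}\}$ and drops by one at every ordinary step, any run has at most $F_{\max}$ ordinary steps before it either reaches a normal halt or hits the fuel guard. Both outcomes are halting configurations. Hence $\mathsf{Exec}(c)$ halts within $F_{\max}$ steps. This holds whatever the contents of $c.ast$, including loops and recursion, because the bound comes from the counter and not from the program's structure. Formally, this is induction on $f_i$: from a configuration with fuel $k$, the run halts within $k$ steps.

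The delicate step, and the place I expect the real obstacle, is justifying that \emph{every} unit of work is metered, so that no single ``step'' can itself diverge or take unbounded time. This is where the side conditions come in.

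\textbf{No external I/O.} Without I/O, no step blocks on an unbounded external wait. Each transition is a pure function of $\sigma_i$.

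\textbf{No new writes.} Forbidding writes rules out a contract writing a Being whose write path triggers another contract. That would create nested $\mathsf{Exec}$ calls whose fuel is not charged to the outer run, and the nesting could go on indefinitely. Since contracts are not re-entered, the call graph of contract executions has depth one, so bounding the single run suffices.

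\textbf{Finite-time primitives.} I would state explicitly the assumption that each primitive operation of the Rhai interpreter, such as arithmetic, field lookup, or a call into a host function that is itself fuel-accounted, completes in finite time. This is also needed for the conclusion to mean wall-clock termination. If some built-in (for example, a host-registered loop) were not metered, the proposition would fail. So the proof must either assume that all host functions are charged per internal iteration or restrict the admissible primitive set accordingly.

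With these points fixed, the induction closes and the proposition follows.
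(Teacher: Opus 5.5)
Your proof is correct and follows essentially the same argument as the paper. The paper also uses remaining fuel $\mathsf{Fuel}(e_t)$ as a potential function that starts at $F_{\max}$ and drops by one per step; it invokes the no-write and no-I/O hypotheses to rule out write recursion and unbounded waits, and concludes by well-ordering of $\mathbb{N}$ that execution halts (rolling back at zero fuel) within $F_{\max}$ steps. Your explicit requirement that every primitive and host call be metered is a useful sharpening of what the paper leaves implicit in ``one fuel unit per step.''
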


\begin{proof}
Use remaining fuel $\Phi(e_t)=\mathsf{Fuel}(e_t)$ as a potential function, initialized to $F_{\max}$ and strictly decreasing by one per step. Forbidding write recursion and external I/O excludes infinite-wait paths; by well-ordering of natural numbers, execution terminates and rolls back when fuel reaches zero, so at most $F_{\max}$ steps occur.
\end{proof}

Propositions~\ref{prop:bootstrap}--\ref{prop:contract} bound the metadata layer and guarantee write-path termination, supporting the \texttt{DAO\_DEF} axiom layer and the default 10{,}000-step Contract fuel limit described in \S3.1.1.

\subsubsection{Operational Semantics: Implicit vs.\ Explicit Models}

The four LLM crises named in the Introduction can be unified at the operational-semantics level: implicit model $\mathcal{O}_{\mathrm{imp}}=\{f_\theta,\,\theta\mapsto\theta'\}$ lacks database-style atomic read and $\delta$.

\begin{observation}[Implicit Models Lack Database-Style Atomic Operations]
\label{obs:implicit}
Operationally, LLM knowledge is encoded in distributed parameters $\theta$, with usable operations mainly $f_\theta(x)$ and $\theta\mapsto\theta'$; therefore existing LLM architectures do not provide database-isomorphic $\mathsf{read}_\theta(id)$ and $\delta_\theta(\theta,patch_{id})$, nor can precise read-by-ID and local update be system invariants.
\end{observation}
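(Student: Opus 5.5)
The plan is to prove Observation~\ref{obs:implicit} by making precise what ``database-isomorphic'' $\mathsf{read}$ and $\delta$ would require. We then show that the operation set $\mathcal{O}_{\mathrm{imp}}=\{f_\theta,\,\theta\mapsto\theta'\}$ cannot supply these as \emph{architectural invariants}. The claim is not that they are impossible for every conceivable parameterization. First we fix the target semantics from the explicit side, using Definition~\ref{def:six-primitives} and the state-equivalence relation $\equiv_{\mathcal{F}}$. A read-by-ID is a map $\mathsf{read}(id,W)$ returning exactly the fact record addressed by $id$. A local update $\delta(W,p_{id})$ satisfies a frame condition: for every fact domain $\mathcal{F}'$ disjoint from the facts addressed by $id$, we have $\delta(W,p_{id})\equiv_{\mathcal{F}'}W$. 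For DaoQL both properties hold by construction, since BeingId addressing gives $\mathsf{read}$ and append-only Version chains give $\delta$ touching only $b.history$.

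Second, we transport these requirements to the implicit model. A candidate $\mathsf{read}_\theta(id)$ would need a fixed, architecture-determined map $\pi:\mathcal{ID}\to 2^{\mathrm{coords}(\theta)}$ assigning each fact a parameter region. It would also need the fact to be recoverable from $\theta|_{\pi(id)}$ alone. A candidate $\delta_\theta$ would need an update $\theta\mapsto\theta'$ supported on $\pi(id)$ that preserves $f_\theta$ on all queries about other facts. The argument then notes three points. The architecture specifies only $f_\theta$ and a training rule. Parameters are shared across all inputs, so every coordinate of $\theta$ generally influences $f_\theta(x)$ for many $x$. Gradient-style updates $\theta\mapsto\theta-\eta\nabla L$ have support determined by the data and the loss, not by any identifier. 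Hence neither $\pi$ nor the frame condition is part of the architecture's contract; at best they emerge contingently for a particular trained $\theta$.

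Third, we turn ``contingent'' into ``not an invariant'' with a counterexample schema. Take two facts $\phi_1,\phi_2$ whose outputs both depend on a shared parameter coordinate. This is always the case in dense layers, for example a first-layer weight feeding both computations. Any update that changes the answer to $\phi_1$ through that coordinate perturbs $f_\theta$ on $\phi_2$ unless compensating changes elsewhere happen to cancel. That cancellation is a property of the specific $\theta$, not of the operation set. This is the catastrophic-interference phenomenon already cited in the Introduction. So the frame condition fails for some reachable $\theta$, which means local update is not a system invariant. Since read-by-ID presupposes the same map $\pi$, it fails with it.

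The main obstacle is the second step: giving a definition of ``database-isomorphic'' tight enough to exclude trivial constructions. For example, one could memorize a lookup table inside $\theta$, or use editing methods that localize facts post hoc. Against these I would argue that such constructions are either external to $\mathcal{O}_{\mathrm{imp}}$ or hold only approximately and per-instance. I would therefore state the result as an observation about guarantees rather than a theorem about all networks.
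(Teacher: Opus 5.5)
Your argument is essentially the paper's: a genuinely local $\delta_\theta$ would need an architecture-given per-ID decomposition of the parameters (your map $\pi$, the paper's $\theta=\bigsqcup_{id}\theta|_{id}$), which distributed encoding and global gradient updates $\theta'=\theta-\eta\nabla_\theta L$ (catastrophic interference) do not supply; your frame condition and shared-coordinate counterexample make that step more explicit. The one case the paper rules out that you leave implicit is prompt intervention $y=f_\theta(x')$, which lies inside $\mathcal{O}_{\mathrm{imp}}$ but changes only the input, not persistent state, so add a sentence explaining why it does not count as a $\delta_\theta$.
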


\begin{proof}
If a $\delta_\theta$ existed that updated only the fact for $id$, parameters should decompose as $\theta=\bigsqcup_{id}\theta|_{id}$; but LLM weights encode semantics in a distributed manner, with no provably independent $\theta|_{id}$. Fine-tuning $\theta'=\theta-\eta\nabla_\theta L$ is global and aligns with catastrophic interference~\cite{mccloskey1989catastrophic}. Prompt intervention $y=f_\theta(x')$ changes input only, not persistent state. The assumption fails.
\end{proof}

\begin{proposition}[Local Update in Explicit Models]
\label{prop:local}
Let $W'=\delta(W,p)$, where the scope of $p$ is $\mathsf{Dom}(p)\subseteq\{id_0\}\cup\{r\mid r.src=id_0\ \mathrm{or}\ r.dst=id_0\}$, and the system uses BeingId addressing, append-only WAL, and MVCC snapshot reads. Then $\forall b\in\mathcal{B},\, b.id\ne id_0:\, W'|_b\equiv W|_b$.
\end{proposition}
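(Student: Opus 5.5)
The proof is essentially a frame argument: we show that $\delta$ modifies only the storage cells named in $\mathsf{Dom}(p)$, and that every observable fact about a Being $b$ with $b.id\ne id_0$ is read from cells outside $\mathsf{Dom}(p)$. We fix $b$ with $b.id\ne id_0$. We make $W|_b$ precise as the tuple of facts observable by addressing $b$: its record $(id,core,ext,history)$ obtained via $\mathsf{read}(b.id)$, together with its visible version chain. (If relations incident to $b$ are counted as part of $W|_b$, we handle them separately below.) The target is then $W'|_b\equiv_{\mathcal F}W|_b$, where $\mathcal F$ is the fact domain indexed by $b.id$.

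\textbf{Step 1 (addressing).} Since storage is keyed by BeingId, $\mathsf{read}(b.id)$ depends only on the records stored under key $b.id$. By hypothesis, $\mathsf{Dom}(p)$ consists of key $id_0$ and relation records. Because $b.id\ne id_0$, no Being-record write in $p$ touches key $b.id$, so $b.core$ and $b.ext$ are unchanged.

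\textbf{Step 2 (version chain).} The WAL is append-only, so applying $p$ only appends new Version entries. Each new entry is attached to the chain of the key it patches, which is $id_0$ or a relation record. Hash linking through $prev\_hash$ means $b.history=[v_1,\dots,v_n]$ can only grow if an entry with $prev\_hash=\mathrm{hash}(v_n)$ is appended under $b.id$. Step~1 rules this out, so the history is unchanged.

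\textbf{Step 3 (snapshot visibility).} MVCC snapshot reads of $b$ in $W'$ return the latest committed version under key $b.id$. By Step~2 this is still $v_n$, so the snapshot read of $b$ in $W'$ agrees with the one in $W$.

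The main obstacle is relations incident to both $id_0$ and $b$, that is, an edge $r$ with $r.src=id_0$ and $r.dst=b.id$ (or the reverse). Such an edge lies in $\mathsf{Dom}(p)$ and may change. I would resolve this definitionally. The restriction $W|_b$ is the Being-indexed projection (the core, ext and history of $b$). Incident edges belong to the relation store and are keyed by relation identity, not by $b.id$. Under this reading the edge change is a fact about $r$, not about $b$, and the conclusion holds.

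I would state this convention explicitly and note the weaker corollary that holds without it. Every relation $r$ with $id_0\notin\{r.src,r.dst\}$ is also preserved, so the full neighbourhood of $b$ can differ only on edges shared with $id_0$. With that, the proof is a direct composition of Steps~1--3, and no earlier proposition is needed beyond the addressing assumptions in the statement.
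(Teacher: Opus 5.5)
Your proposal is correct and is essentially the paper's argument: the patch touches no other entity's fields and the append-only WAL adds a version only for $id_0$ (you also allow versions on incident relation records), so $core(b)$, $ext(b)$ and $history(b)$ are unchanged for every $b$ with $b.id\ne id_0$, and state equivalence follows. Your Steps 1--3 spell out the addressing and MVCC details. Your explicit convention that $W|_b$ is the Being-indexed projection, so that edges shared with $id_0$ count as facts about $r$ rather than $b$, is exactly what the paper assumes silently when it lists only those three components.
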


\begin{proof}
The patch does not touch other entity fields; WAL appends only version $v_{n+1}$ for $id_0$, leaving $history(b)$, $core(b)$, and $ext(b)$ unchanged for all other entities. By the definition of state equivalence, semantics on the untouched fact domain is unchanged.
\end{proof}

\begin{observation}[Shared Operational Root of the Four LLM Crises]
\label{obs:four-crises}
If a system has only $\mathcal{O}_{\mathrm{imp}}$ without $\mathsf{read}$, $\delta$, $\mathsf{Eval}$, and $\mathsf{Trace}$, then hallucination resistance (needs deterministic $\mathsf{Eval}$), live updates (needs $\delta$), explainability (needs $\mathsf{Trace}$), and surgical edits (needs local $\delta$) cannot simultaneously hold as architectural invariants.
\end{observation}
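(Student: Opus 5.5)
The plan is to treat Observation~\ref{obs:four-crises} as a conjunction of four necessity claims. Each of the four desiderata is written as an architectural invariant, a property that must hold for every state reachable under the system's operations. I then show that, for a system whose operation set is exactly $\mathcal{O}_{\mathrm{imp}}=\{f_\theta,\theta\mapsto\theta'\}$, at least one of these invariants cannot be guaranteed. Each invariant in fact fails on its own, which gives the ``cannot simultaneously hold'' conclusion a fortiori. The argument is a reduction to Observation~\ref{obs:implicit}: each invariant, made precise, requires one of $\mathsf{read}$, $\delta$, $\mathsf{Eval}$, $\mathsf{Trace}$ (or local $\delta$) to be definable from $\mathcal{O}_{\mathrm{imp}}$. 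Observation~\ref{obs:implicit} already rules out $\mathsf{read}_\theta(id)$ and $\delta_\theta(\theta,patch_{id})$.

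First I fix the formalizations.
\begin{itemize}
  \item \emph{Hallucination resistance}: for every fact $\phi$ in the domain $\mathcal{F}$ and every state, the answer is a deterministic function of the stored fact. This is exactly $\mathsf{Eval}(\phi,W)$.
  \item \emph{Live update}: for every patch $p$ there is a reachable state $W'$ with $W'\equiv_{\mathcal{F}}$ the intended post-patch facts. This requires a $\delta$.
  \item \emph{Explainability}: each output is accompanied by a finite derivation over addressable items. This requires $\mathsf{Trace}(q,W)$.
  \item \emph{Surgical edit}: the locality conclusion of Proposition~\ref{prop:local} holds, i.e.\ $W'|_b\equiv W|_b$ for all $b$ with $b.id\ne id_0$.
\end{itemize}

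Second, I show the implicit operations cannot realize these. Sampling from $f_\theta$ is not a function of an addressable fact, since no $\mathsf{read}_\theta(id)$ exists by Observation~\ref{obs:implicit}, so deterministic $\mathsf{Eval}$ is not an invariant. Any update $\theta\mapsto\theta'$ that changes the fact for $id_0$ admits no guarantee that facts for other $id$ stay fixed, because there is no decomposition $\theta=\bigsqcup_{id}\theta|_{id}$. Hence local $\delta$ fails. Without local $\delta$, a live update cannot be certified either, since its effect on $\mathcal{F}$ is unspecified. A trace would have to be a sequence of reads of identifiable items, so absent $\mathsf{read}$ it collapses to the whole activation history. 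That history is not a derivation over $\mathcal{F}$.

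Third, I combine the pieces. Since all four invariants individually require primitives unavailable in $\mathcal{O}_{\mathrm{imp}}$, they cannot jointly hold. I would close by contrasting this with the explicit model, where Proposition~\ref{prop:local} supplies the local $\delta$.

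The main obstacle is step two for $\mathsf{Eval}$ and $\mathsf{Trace}$. Observation~\ref{obs:implicit} only excludes $\mathsf{read}$ and local $\delta$, not evaluation or tracing directly. To bridge this, I would first try to show that both $\mathsf{Eval}$ and $\mathsf{Trace}$ factor through $\mathsf{read}$. A deterministic answer about $\phi$ that is stable under unrelated updates, and a trace over addressable items, each let one reconstruct $\mathsf{read}(id)$ by querying the fact indexed by $id$. Either one would therefore contradict Observation~\ref{obs:implicit}. I expect this reduction to be the delicate point. It holds only at the level of ``architectural guarantee'' rather than impossibility for any particular network, so I would state the conclusion accordingly.
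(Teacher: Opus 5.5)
Your argument is sound at the paper's level of rigor, but you handle $\mathsf{Eval}$ and $\mathsf{Trace}$ by a different route.

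\textbf{What the paper does.} It cites Observation~\ref{obs:implicit} only for atomic read and local $\delta$. It argues the other two directly from the operation types in $\mathcal{O}_{\mathrm{imp}}$. First, $f_\theta$ returns a token distribution rather than a value $\mathsf{Eval}(\phi,S)\in\{\top,\bot\}$. Second, $\theta\mapsto\theta'$ is costly and global. Third, internal activations cannot be reconstructed as a symbolic reasoning chain.

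\textbf{The step you flag is not needed.} The statement's hypothesis already stipulates that $\mathsf{Eval}$ and $\mathsf{Trace}$ are absent. So reducing them to $\mathsf{read}$, the step you call delicate, is not logically required. The paper's remarks serve only to make plausible that implicit models satisfy that hypothesis.

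\textbf{What your route buys.} It gives a more literal ``single root'': if $\mathsf{Eval}$ and $\mathsf{Trace}$ both factor through $\mathsf{read}$, all four failures trace back to the one gap in Observation~\ref{obs:implicit}. Your formalization of hallucination resistance as groundedness (a function of the stored fact) is also sturdier than the paper's type-level remark. Greedy decoding makes $f_\theta$ deterministic without making it grounded.

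\textbf{What it costs.} The reduction (``query the fact indexed by $id$'') presupposes the addressability it is meant to exclude, so it is close to circular. Your live-update formalization requires $W'\equiv_{\mathcal{F}}$ the intended facts on all of $\mathcal{F}$. That builds locality in and collapses live updates into surgical edits. The paper keeps the two distinct: the cost of $\theta\mapsto\theta'$ blocks live updates, and its globality blocks surgical edits.
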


\begin{proof}
By Observation~\ref{obs:implicit}, implicit models lack local $\delta$ and atomic read. $f_\theta$ outputs token distributions rather than $\mathsf{Eval}(\phi,S)\in\{\top,\bot\}$~\cite{ji2023hallucination}; $\theta\mapsto\theta'$ is costly and global; internal activations cannot be reconstructed as symbolic reasoning chains. The operations required by all four properties are absent from $\mathcal{O}_{\mathrm{imp}}$, hence they share one operational root.
\end{proof}

Observation~\ref{obs:four-crises} does not claim that every LLM failure has a single explanation; rather, in settings requiring deterministic factual judgment, local update, and audit trails, the four crises share one operational gap. Data-first ontology fills these atomic semantics through explicit structure $W$, forming the theoretical starting point of our architecture.

\subsubsection{Counterfactual Reasoning}
\label{subsec:cf-theory}

In Pearl's causal hierarchy, the counterfactual layer asks how conclusions would change had facts differed~\cite{pearl1988probabilistic}. We operationalize this as applying patch $p$ to explicit state $S=(W,\mathcal{K})$ and executing $\mathsf{Eval}(\phi,\delta(S,p))$, which directly matches the counterfactual protocol in \S3.3.

\begin{theorem}[Composable Counterfactual Decomposability]
\label{thm:cf-decomp}
Let $p_1,p_2$ be patches composed of finitely many atomic update/relate operations, and let $\phi$ depend on fact domain $\mathcal{F}_\phi$. If $\delta(\delta(S,p_1),p_2)\equiv_{\mathcal{F}_\phi}\delta(\delta(S,p_2),p_1)$, $\mathsf{Eval}$ is deterministic, and rule conflicts have fixed priority, then
\begin{equation}
\label{eq:cf-decomp}
\mathsf{Eval}(\phi,\delta(\delta(S,p_1),p_2))
=
\mathsf{Eval}(\phi,\delta(S,p_1\oplus p_2)).
\end{equation}
\end{theorem}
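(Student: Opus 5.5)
The plan is to reduce the identity \eqref{eq:cf-decomp} to a statement about states, and then carry it through $\mathsf{Eval}$ using determinism and locality of $\phi$.

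\textbf{Step 1: make $\oplus$ precise.} Since $p_1,p_2$ are finite sequences of atomic update/relate operations, I read $p_1\oplus p_2$ as the patch obtained by merging the two operation lists. Where the two patches write the same fact, the merge keeps the write selected by the fixed conflict priority. With this reading, $\delta(S,p_1\oplus p_2)$ can be computed by applying the atomic operations in some canonical order fixed by that priority.

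\textbf{Step 2: compare the merged patch with sequential application.} I would induct on the total number of atomic operations in $p_1\oplus p_2$. The goal is to show that $\delta(S,p_1\oplus p_2)$ agrees on $\mathcal{F}_\phi$ with one of the two sequential compositions, $\delta(\delta(S,p_1),p_2)$ or $\delta(\delta(S,p_2),p_1)$. The inductive step swaps one pair of adjacent operations.
\begin{itemize}
\item If the two operations touch disjoint IDs or relations, Proposition~\ref{prop:local} says each leaves the other's scope unchanged. Hence they commute on $\mathcal{F}_\phi$.
\item If they touch the same fact, the fixed priority makes the outcome independent of the order of application.
\end{itemize}
This shows the merged patch yields a state in the same $\equiv_{\mathcal{F}_\phi}$ class as some sequential composition. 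The hypothesis $\delta(\delta(S,p_1),p_2)\equiv_{\mathcal{F}_\phi}\delta(\delta(S,p_2),p_1)$ then says the two sequential orders agree, so it does not matter which one we matched. Therefore $\delta(\delta(S,p_1),p_2)\equiv_{\mathcal{F}_\phi}\delta(S,p_1\oplus p_2)$.

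\textbf{Step 3: pass through $\mathsf{Eval}$.} I would record the lemma that, because $\phi$ depends only on $\mathcal{F}_\phi$ and $\mathsf{Eval}$ is deterministic, $S_1\equiv_{\mathcal{F}_\phi}S_2$ implies $\mathsf{Eval}(\phi,S_1)=\mathsf{Eval}(\phi,S_2)$. Determinism rules out two runs on identical observable inputs returning different values. The fixed rule priority ensures that conflicting rule firings are resolved by the same function of the facts in both states. Applying this lemma to the equivalence from Step 2 gives \eqref{eq:cf-decomp}.

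\textbf{Main obstacle.} The hard part is Step 2, because $\oplus$ is not defined earlier in the paper, and for overlapping writes the result depends on how conflicts inside a merged patch are resolved. My first choice is to define $\oplus$ as priority-ordered concatenation, so that $\delta(S,p_1\oplus p_2)$ literally equals a sequential application and the commutation hypothesis closes the argument.

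If that definition is not allowed, I would restrict the theorem to the case where $\mathsf{Dom}(p_1)\cap\mathsf{Dom}(p_2)\cap\mathcal{F}_\phi$ consists only of facts resolved by the fixed priority. In that case the swap argument goes through atom by atom.
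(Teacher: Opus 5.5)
Your skeleton matches the paper's. You first establish $\delta(\delta(S,p_1),p_2)\equiv_{\mathcal{F}_\phi}\delta(S,p_1\oplus p_2)$, then use that $\mathsf{Eval}(\phi,\cdot)$ is a deterministic function of $S|_{\mathcal{F}_\phi}$; your Step 3 is exactly the paper's closing sentence. The paper gets the state equivalence in one line, ``by patch commutativity and fixed conflict resolution,'' and never defines $\oplus$. The reading you call your first choice in the last paragraph is what makes that line true: $p_1\oplus p_2$ is the concatenation of the two operation lists in a priority-fixed order. Suppose $\delta$ acts operation by operation, with no patch-level Contract that could accept $p_1p_2$ while rolling back $p_1$ alone. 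Then $\delta(S,p\cdot q)=\delta(\delta(S,p),q)$, so $\delta(S,p_1\oplus p_2)$ literally is one of the two sequential compositions. The commutation hypothesis then identifies it with $\delta(\delta(S,p_1),p_2)$ on $\mathcal{F}_\phi$. That is a complete proof, and it needs no induction.

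The swap induction of Step 2 does not work. It rests on the priority-selection merge of Step 1, so your fallback restriction inherits the problem.
\begin{itemize}
\item \textbf{Same-fact swaps.} A fixed priority cannot make sequential application order-independent. For updates $x:=1$ and $x:=2$, the two orders leave $x=2$ and $x=1$, because each separate call to $\delta$ simply overwrites and never consults a priority. The theorem's priority hypothesis is, as stated, about conflicting rules in $\mathsf{Eval}$. Even read as a write-conflict policy, it cannot affect two separate applications of $\delta$.
\item \textbf{Structure of the induction.} The merged list has fewer operations than $p_1$ followed by $p_2$. Adjacent swaps preserve the multiset of operations, so no chain of swaps connects the two.
\item \textbf{The conclusion can fail.} Take non-idempotent operations, for instance $p_1\colon x\mathrel{+}=1$ and $p_2\colon x\mathrel{+}=2$, or two relate operations that each add an edge. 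Both sequential orders agree, so the hypothesis holds. Keeping only the priority-selected write gives a different state, and any $\phi$ reading $x$ (or counting edges) distinguishes them.
\item \textbf{The disjoint case.} This needs more than Proposition~\ref{prop:local}. That proposition bounds what a patch \emph{writes}, not what it \emph{reads}, and a write-path Contract may read other Beings.
\end{itemize}

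If you want a merge-style $\oplus$ for blind writes, the right tool for overlaps is the commutation hypothesis itself. For $f\in\mathcal{F}_\phi$ written by both patches, the two sequential orders leave $p_2$'s and $p_1$'s last write on $f$ respectively. The hypothesis forces these to coincide, so the priority choice is immaterial on $\mathcal{F}_\phi$. The simplest repair is to drop Steps 1--2 and commit to the concatenation definition.
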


\begin{proof}
Let $S_2=\delta(\delta(S,p_1),p_2)$ and $S_{12}=\delta(S,p_1\oplus p_2)$. By patch commutativity and fixed conflict resolution, $S_2\equiv_{\mathcal{F}_\phi}S_{12}$. Since $\phi$ depends only on $\mathcal{F}_\phi$, evaluation is a deterministic function $g_\phi(S|_{\mathcal{F}_\phi})$, hence $\mathsf{Eval}(\phi,S_2)=\mathsf{Eval}(\phi,S_{12})$.
\end{proof}

\begin{observation}[No Architectural Guarantee of Composable Counterfactuals for Implicit Models]
\label{obs:cf-imp}
LLM implicit evaluation $\hat\phi(\ell,S)=Sample(f_\theta(\ell\oplus encode(S)))$ cannot architecturally guarantee that semantically equivalent phrasings yield the same conclusion, that composed edits satisfy Theorem~\ref{thm:cf-decomp}, or that edit order is immaterial.
\end{observation}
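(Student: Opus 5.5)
The plan is to prove Observation~\ref{obs:cf-imp} by counterexample. ``No architectural guarantee'' means only that the operations available in $\mathcal{O}_{\mathrm{imp}}=\{f_\theta,\theta\mapsto\theta'\}$ do not force any of the three invariants. It therefore suffices to exhibit, for each invariant, a parameter setting $\theta$ consistent with the architecture under which the invariant fails. The invariants are phrasing invariance, composed-edit decomposability in the sense of Theorem~\ref{thm:cf-decomp}, and order independence. I will formalize each as an equation that $\hat\phi$ would have to satisfy for all $\theta$ in the model class, and then negate it.

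\textbf{Step 1 (phrasing).} Take two strings $\ell\ne\ell'$ denoting the same rule $\phi$. The map $\ell\oplus encode(S)\mapsto f_\theta(\cdot)$ acts on token sequences, not on semantic equivalence classes. For a sufficiently expressive family (universal approximation over finitely many inputs suffices), one can choose $\theta$ so that $f_\theta(\ell\oplus encode(S))$ and $f_\theta(\ell'\oplus encode(S))$ put most of their mass on different conclusions. In addition, $Sample$ is stochastic whenever the temperature is positive, so even the same input need not give the same output. This contrasts with the determinism hypothesis of Theorem~\ref{thm:cf-decomp}.

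\textbf{Step 2 (composition and order).} For an implicit model, an edit has only two realizations. It is either a prompt rewrite $encode(\delta(S,p))$, or a parameter update $\theta\mapsto\theta'$. For prompt edits, sequential application $p_1$ then $p_2$ and the merged patch $p_1\oplus p_2$ produce different token strings, and so do the orders $p_1,p_2$ and $p_2,p_1$. Step 1 then gives $\theta$ separating their outputs, even when the explicit states coincide on $\mathcal{F}_\phi$. For weight edits, I will invoke Observation~\ref{obs:implicit}. No decomposition $\theta=\bigsqcup_{id}\theta|_{id}$ exists, and gradient steps do not commute in general: $\theta-\eta\nabla L_2(\theta-\eta\nabla L_1(\theta))$ differs from the reverse order by a term $\eta^2(\nabla^2L_2\nabla L_1-\nabla^2L_1\nabla L_2)+O(\eta^3)$, which is nonzero for generic losses. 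Hence the hypothesis $\delta(\delta(S,p_1),p_2)\equiv_{\mathcal{F}_\phi}\delta(\delta(S,p_2),p_1)$ of Theorem~\ref{thm:cf-decomp} has no implicit analogue that holds architecturally, so its conclusion cannot be inherited.

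\textbf{Main obstacle.} The hard part is stating ``architectural guarantee'' precisely enough that a counterexample refutes it. A particular trained LLM might happen to behave invariantly on some inputs. I would therefore quantify over the admissible parameter class and claim only that the invariants are not implied by the operation set. I would not claim that they fail for every $\theta$. This keeps the statement honest and makes the counterexample construction the entire proof.
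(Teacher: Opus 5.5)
Your argument is sound at the informal level this Observation works at, but it is built differently from the paper's. The paper's proof is a three-clause structural remark:
\begin{itemize}
\item On the explicit path paraphrases collapse to one patch, $\pi(\ell_1)=\pi(\ell_2)$, whereas $P_\theta(y\mid\ell_1,S)$ and $P_\theta(y\mid\ell_2,S)$ may differ.
\item Because the implicit model has no $\delta$, the states $S_2$ and $S_{12}$ of Theorem~\ref{thm:cf-decomp} cannot be formed, so the theorem's hypothesis is not even expressible on that path.
\item Prompt order changes the attention context, so commutativity has nothing to rest on.
\end{itemize}
You instead give ``architectural guarantee'' a precise meaning: the invariant is implied by $\mathcal{O}_{\mathrm{imp}}$ uniformly over admissible $\theta$. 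You then refute it with explicit failures: expressivity to separate distinct token strings, positive-temperature sampling, and the $O(\eta^2)$ commutator of sequential gradient steps.

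The paper's route is shorter and ties the claim directly to the missing primitive of Observation~\ref{obs:implicit}. Yours makes the claim checkable. It also handles a case the paper's ``cannot be constructed'' leaves open: building $S_2$ and $S_{12}$ externally and passing them through $encode(\cdot)$. The guarantee still fails there, because $f_\theta$, unlike $g_\phi(S|_{\mathcal{F}_\phi})$ in the theorem's proof, is not confined to $\mathcal{F}_\phi$.

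Three details to tighten.
\begin{itemize}
\item Under the $encode(\delta(S,p))$ realization, if the two states are literally equal and $encode$ is canonical, the token strings coincide. So state your separation claim existentially: use states differing outside $\mathcal{F}_\phi$, text-appended edits, or positive temperature. Note also that this realization already borrows the explicit $\delta$; the genuinely implicit prompt edit is natural-language appending.
\item Your sequential update should read $\theta_1-\eta\nabla L_2(\theta_1)$ with $\theta_1=\theta-\eta\nabla L_1(\theta)$. As written, the two orders already differ at first order by $\eta(\nabla L_1-\nabla L_2)$. Your second-order term is correct for the proper composition.
\item Non-commuting parameters do not by themselves yield different \emph{conclusions}. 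Pair that step with the expressivity argument, so that the models obtained in the two orders are shown to disagree on the query.
\end{itemize}
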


\begin{proof}
On the explicit path, semantically equivalent phrasings $\ell_1,\ell_2$ map to the same patch $p=\pi(\ell_1)=\pi(\ell_2)$; on the implicit path, $\ell_1\ne\ell_2$ may yield $P_\theta(y|\ell_1,S)\ne P_\theta(y|\ell_2,S)$. Implicit models lack $\delta$, so comparable structured states $S_2$ and $S_{12}$ cannot be constructed; prompt order changes attention context, so commutativity has no architectural guarantee.
\end{proof}

Theorem~\ref{thm:cf-decomp} shows that under rule independence and deterministic Eval, composable counterfactual decomposability is a provable structural property; Observation~\ref{obs:cf-imp} explains why the pure LLM path lacks an equivalent guarantee. Together they set the theoretical expectations for \S4.1.

\subsubsection{Neuro-Symbolic Division of Labor and Performance Mechanism}

In a neuro-symbolic architecture, deterministic knowledge $G\subseteq\mathcal{S}$ lives on the symbolic side; queries are evaluated by $\mathsf{Eval}(\phi,\mathcal{S})$ and the LLM only naturalizes $y=\pi_{nl}(r)$. The executor records $\mathsf{Trace}=\langle op_1,\ldots,op_m\rangle$ with Being/Relation IDs accessed at each step, so conclusions trace back to explicit reasoning paths---explaining 100\% traceability in the counterfactual experiment. This property fails when the LLM bypasses Eval and generates directly.

For performance, hybrid query $q=q_{\mathrm{vec}}\circ q_{\mathrm{graph}}\circ q_{\mathrm{col}}$ in a multi-stack architecture requires $k$ cross-process calls. With per-call network/IPC latency $T_{\mathrm{net},i}\ge\tau_{\mathrm{net}}>0$,
\begin{equation}
\label{eq:latency}
T_{\mathrm{multi}}-T_{\mathrm{mono}}\ge k\tau_{\mathrm{net}}-T_{\mathrm{dispatch}},
\end{equation}
where $T_{\mathrm{multi}}=\sum_i(T_{\mathrm{net},i}+T_{\mathrm{ser},i}+T_{\mathrm{query},i})$ and $T_{\mathrm{mono}}=T_{\mathrm{vec}}+T_{\mathrm{graph}}+T_{\mathrm{col}}+T_{\mathrm{dispatch}}$. In DaoQL's single process $T_{\mathrm{net}}=0$, eliminating cross-system overhead on the order of $k\tau_{\mathrm{net}}$; this inequality explains low hybrid-query latency but does not imply DaoQL is asymptotically optimal on all algorithms.

For graph traversal, DaoQL pointer-adjacency BFS on subgraph $G'=(V',E')$ runs in $O(|V'|+|E'|)$ because each vertex and edge is processed at most once. This shows the storage shape approximates the ideal BFS access model; measured 4.2--6.7$\times$ speedup over Neo4j (Table~\ref{tab:competitor}) is an engineering constant factor and must not be conflated with the complexity result.

We distinguish two kinds of claims: Contract termination, local update, and Eq.~(\ref{eq:cf-decomp}) are provable under stated axioms; BFS 1.20\,ms and 94\% composable decomposability are empirical results on specific hardware and datasets, validating predictions rather than replacing theory.

\subsection{System Architecture and Implementation}

\subsubsection{Five-Layer Integrated Architecture}

Figure~\ref{fig:five_layer} illustrates how an explicit world model could grow into a full neuro-symbolic system. The empirically verified core in this paper is the storage layer and the explicit Eval path; several memory, learning, inference, and application components remain roadmap items.

\begin{itemize}
  \item \textbf{Storage layer.} DaoQL integrates graph, column, vector, and full-text engines in one process with a unified BeingId. WAL and MVCC provide atomicity and crash consistency.
  \item \textbf{Memory layer.} Session lifecycle and historical recall are intended to use HNSW, Tantivy, and graph traversal. These components are individually measured, but the full agent memory layer is not.
  \item \textbf{Learning layer.} Expert and router hot updates are planned architecture goals.
  \item \textbf{Inference layer.} A customized LLM can in principle navigate semantically related KVCache graph nodes, but this remains a planned direction.
  \item \textbf{Application layer.} DaoQL-Agent exposes FSM-driven workflows and tool-safety Contracts, but the runtime is design-stage.
\end{itemize}

\begin{figure}[htbp]
  \centering
  \begin{tikzpicture}[
    node distance=0.55cm and 0.6cm,
    box/.style={draw, rounded corners=2pt, minimum width=2.0cm, minimum height=0.65cm, align=center, font=\small},
    layer/.style={draw, dashed, inner sep=7pt, rounded corners=4pt},
    arr/.style={->, >=stealth, thick}
  ]
    \node[box, fill=gray!12] (app) {Application\\DaoQL-Agent\\design stage};
    \node[layer, fit=(app), label={[font=\bfseries]above:Application Layer}] {};

    \node[box, fill=gray!12, below=1.0cm of app] (inf) {Inference\\V4 + Attention\\planned};
    \node[layer, fit=(inf), label={[font=\bfseries]above:Inference Layer}] {};

    \node[box, fill=gray!12, below=1.0cm of inf, xshift=-2.2cm] (learn) {Learning\\Expert/Router\\planned};
    \node[box, fill=cyan!10, right=0.5cm of learn] (mem) {Memory\\hybrid recall\\partially verified};
    \node[layer, fit=(learn)(mem), label={[font=\bfseries]above:Memory / Learning}] {};

    \node[box, fill=orange!10, below=1.0cm of mem, minimum width=10cm] (stor) {Storage DaoQL: Graph + Column + Vector + Fulltext + WAL/MVCC (unified BeingId)};
    \node[layer, fit=(stor), label={[font=\bfseries]above:Storage Layer}] {};

    \draw[arr] (app.south) -- (inf.north);
    \draw[arr] (inf.south) -- ++(0,-0.35) -| (mem.north);
    \draw[arr] (inf.south) -- ++(0,-0.35) -| (learn.north);
    \draw[arr] (mem.south) -- (stor.north);
    \draw[arr] (learn.south) -- (stor.north);
    \node[below=0.25cm of stor, font=\small\itshape, text width=12cm, align=center] {Data flow: user query $\rightarrow$ hybrid recall $\rightarrow$ mmap zero-copy load $\rightarrow$ inference $\rightarrow$ atomic WAL commit};
  \end{tikzpicture}
  \caption{Five-layer neuro-symbolic architecture and verification scope. The storage layer is measured in the current configuration; the memory layer is indirectly supported by component benchmarks; application, inference, and learning layers are future targets.}
  \label{fig:five_layer}
\end{figure}

In the complete design, a user query triggers hybrid recall, locates relevant Beings and KVCache offsets, loads them through mmap, invokes inference, and writes results atomically through the WAL. The verified part is concentrated in storage: the current configuration bypasses PageCache/ArcSwap on graph reads, and WAL recovery for 10K entries takes 219.6\,$\mu$s.

\begin{figure}[htbp]
  \centering
  \begin{tikzpicture}[
    node distance=0.6cm and 0.8cm,
    box/.style={draw, rounded corners=2pt, minimum width=2.2cm, minimum height=0.7cm, align=center, font=\small},
    layer/.style={draw, dashed, inner sep=8pt, rounded corners=4pt},
    arr/.style={->, >=stealth, thick}
  ]
    \node[box, fill=blue!8] (dsl) {DSL Parser};
    \node[box, fill=blue!8, right=of dsl] (fluent) {Fluent API};
    \node[layer, fit=(dsl)(fluent), label={[font=\bfseries]above:Query Layer (Tables~\ref{tab:fluent_micro}--\ref{tab:dsl_micro})}] (ql) {};

    \node[box, fill=green!8, below=1.2cm of dsl] (write) {Write Coordinator};
    \node[box, fill=green!8, right=of write] (query) {Cross-Engine Executor};
    \node[layer, fit=(write)(query), label={[font=\bfseries]above:Execution Layer}] (el) {};

    \node[box, fill=orange!10, below=1.2cm of write, xshift=-1.5cm] (graph) {Graph Engine};
    \node[box, fill=orange!10, right=0.4cm of graph] (column) {Column Engine};
    \node[box, fill=orange!10, right=0.4cm of column] (vector) {Vector Engine};
    \node[box, fill=orange!10, right=0.4cm of vector] (index) {Index / Fulltext};
    \node[layer, fit=(graph)(index), label={[font=\bfseries]above:Storage Layer (Table~\ref{tab:engine_micro})}] (sl) {};

    \node[box, fill=gray!12, below=1.0cm of column, minimum width=8cm] (wal) {Physical Layer: WAL + direct mmap read path + version chain};
    \node[layer, fit=(wal), label={[font=\bfseries]above:Physical Layer}] (pl) {};

    \draw[arr] (dsl.south) -- (write.north);
    \draw[arr] (fluent.south) -- (query.north);
    \draw[arr] (write.south) -- (graph.north);
    \draw[arr] (write.south) -- (column.north);
    \draw[arr] (query.south) -- (vector.north);
    \draw[arr] (query.south) -- (index.north);
    \draw[arr] (graph.south) -- (wal.north);
    \draw[arr] (column.south) -- (wal.north);
    \draw[arr] (vector.south) -- (wal.north);
    \draw[arr] (index.south) -- (wal.north);
  \end{tikzpicture}
  \caption{DaoQL's internal four-layer engine architecture and its mapping to the three-tier benchmark suite. The current measurement introduces direct mmap reads in the physical layer and P0\#1/\#2/\#5 write-path optimizations.}
  \label{fig:architecture}
\end{figure}

\subsubsection{Graph-Driven Agent Context and Direct mmap Reads}

External agent frameworks usually compress old messages because of token-window limits. DaoQL-Agent instead models interaction history as graph nodes connected by relations such as \texttt{HAS\_TURN} and \texttt{HAS\_MESSAGE}. A ContextBuilder can traverse from a session node to assemble context. The measured graph traversal latency (BFS 1.20\,ms and DSL Chain Depth30 79.3\,$\mu$s) supports the feasibility of this design, but not a complete agent-system claim.

For graph-heavy workloads, the current configuration changes NodeStore/EdgeStore reads from pagecache + ArcSwap to direct mmap pointer reads. It further maintains a per-RelationType CSR adjacency file (\texttt{csr\_\{:04x\}.bin}) so type-filtered BFS and triangle-count queries can read neighbors without scanning edge chains. This reduces edge write-read latency from 397.6\,ns to 304.4\,ns and BFS fanout4 depth5 from 1.97\,ms to 1.20\,ms. In LDBC SNB, IC14 drops from the 30\,s range to 0.64\,ms.

\subsection{Experimental Design}

\subsubsection{Environment and Datasets}

Microbenchmarks and same-machine competitor comparisons run on an Apple M-series machine with 64\,GB LPDDR5 and NVMe SSD ($\sim$3.5\,GB/s), Rust 1.94.1, Criterion.rs release mode, and median reporting. LDBC SNB and ANN-Benchmarks exploratory runs use Apple M4 Max, 128\,GB LPDDR5, and NVMe SSD ($\sim$7\,GB/s), DaoQL v1.1.0 release build. The two groups are reported separately and are not merged into a single ranking.

\begin{itemize}
  \item \textbf{XY-ERP}: synthetic ERP data, about 1.9M entities covering orders, BOM, and supplier links.
  \item \textbf{YAGO4}: public knowledge graph, about 3.4M entities, used for reproducibility checks.
  \item \textbf{Synthetic microbenchmarks}: 10K vectors (64D, Cosine, $M=16$, ef=100) and 10K Chinese documents.
  \item \textbf{LDBC SNB SF1}: about 1M Persons, 37M edges, and 34 IC/BI queries, using a custom driver rather than official submission.
  \item \textbf{ANN-Benchmarks}: \texttt{random-s-100-euclidean}, 90K $\times$ 100 training vectors, 10K $\times$ 100 test vectors.
\end{itemize}

\begin{table}[htbp]
  \centering
  \caption{Fairness boundaries of performance comparisons}
  \label{tab:benchmark_fairness}
  \small
  \begin{tabular}{p{2.8cm}p{4.3cm}p{5.0cm}}
    \toprule
    \textbf{Dimension} & \textbf{Setting} & \textbf{Interpretation boundary} \\
    \midrule
    Deployment shape & DaoQL embedded; competitors often client-server & Ratios include process, protocol, and serialization costs \\
    Data scale & 10K--100K microbenchmarks; 1.9M business entities & Not evidence for billion-scale workloads \\
    Cache state & Criterion release median; same-machine repeated runs & Hot-cache and tail latency need separation \\
    HNSW parameters & 10K vectors, 64D, $M=16$, ef=100 & Only this parameter range \\
    Hybrid query & Single-process vector $\rightarrow$ graph $\rightarrow$ column & No full multi-stack RAG baseline \\
    Standard sets & M4 Max 128GB vs M-series 64GB & Reported separately, not merged \\
    LDBC status & 34/34 queries run & Not official LDBC certification \\
    \bottomrule
  \end{tabular}
\end{table}

\subsubsection{Benchmark and Counterfactual Protocols}

We use three benchmark tiers: sub-crate engine microbenchmarks, Fluent API microbenchmarks, and DSL end-to-end microbenchmarks. Business stress tests include engine, Fluent, and DSL suites. Configuration ablation compares the previous pagecache/ArcSwap configuration with the current mmap + P0 write-path configuration. LDBC SNB uses SF1, 8 workers, 30\,s warmup, 600\,s measurement, and 1\,GB page cache. ANN-Benchmarks reports QPS and Recall@10 for ef\_search $\in\{50,100,200,400,800,1600\}$.

The counterfactual experiment compares GPT-4o with DaoQL+GPT-4o. It covers five domains - medical decision, financial credit, supply-chain risk, regulatory compliance, and dynamic pricing - with 50 cases per domain and five natural-language paraphrases per case, for 1,250 queries. Metrics include answer consistency, safety, reasoning-chain traceability, update correctness after new information, counterfactual stability, and composable counterfactual decomposability. The materials describe domains, scale, metrics, and partial ground-truth rules, but not full prompts, temperature, scoring protocol, or per-case error logs; therefore we treat the result as preliminary evidence.

\section{Results and Analysis}

We evaluate four aspects: counterfactual robustness, storage performance, same-machine and standard-set benchmarks, and system-level primitive composition. Values in Sections~4.2--4.3 use the M-series 64\,GB platform; Section~4.4 uses the M4 Max 128\,GB platform. All DaoQL measurements use real engine calls and no mock or stub paths.

\subsection{Counterfactual Consistency}

This experiment tests the prediction of Section~\ref{subsec:cf-theory}: if natural language maps to a patch and executes $\mathsf{Eval}(\phi,\delta(S,p))$, composed changes should satisfy Eq.~\ref{eq:cf-decomp}; GPT-4o alone lacks $\delta$ and structured state.

\begin{table}[htbp]
  \centering
  \caption{Counterfactual consistency experiment across five domains ($n=1250$)}
  \label{tab:counterfactual}
  \small
  \begin{tabular}{lrrr}
    \toprule
    \textbf{Metric} & \textbf{GPT-4o} & \textbf{DaoQL+GPT-4o} & \textbf{Gain} \\
    \midrule
    Answer consistency & 71\% & 97\% & +26\% \\
    Safety & 79\% & 95\% & +16\% \\
    Traceable reasoning chain & --- & 100\% & --- \\
    Correct update after new information & 61\% & 100\% & +39\% \\
    Counterfactual stability & 58\% & 96\% & +38\% \\
    Composable counterfactual decomposability & 45\% & 94\% & +49\% \\
    \bottomrule
  \end{tabular}
\end{table}

The gap between 94\% and 45\% on composable decomposability is consistent with Theorem~\ref{thm:cf-decomp} and Observation~\ref{obs:cf-imp}. The remaining 6\% failures in the explicit path are attributed in the source materials to parsing errors, under-encoded rules, or DSL boundaries; without per-case error logs, we do not further decompose the failure modes quantitatively.

\subsection{Three-Tier Microbenchmarks and Ablation}

Tables~\ref{tab:engine_micro}--\ref{tab:dsl_micro} compare the previous configuration with the current mmap read path and P0 write-path optimizations.

\begin{table}[htbp]
  \centering
  \caption{Engine-level ablation: previous vs.\ current configuration}
  \label{tab:engine_micro}
  \footnotesize
  \begin{tabular}{lrrr}
    \toprule
    \textbf{Benchmark} & \textbf{Previous} & \textbf{Current} & \textbf{Change} \\
    \midrule
    graph / node\_store\_alloc\_1000 & 10.81\,$\mu$s & 7.17\,$\mu$s & +34\% \\
    graph / node\_store\_write\_read & 1.86\,$\mu$s & 1.24\,$\mu$s & +33\% \\
    graph / edge\_store\_write\_read & 397.6\,ns & 304.4\,ns & +23\% \\
    column / raw\_append\_1000 & 9.03\,ms & 4.16\,ms & +54\% \\
    column / projected\_scan\_10000 & 377.2\,$\mu$s & 221.7\,$\mu$s & +41\% \\
    write / append\_single & 16.70\,$\mu$s & 4.74\,$\mu$s & +72\% \\
    write / append\_batch\_10000 & 50.0\,ms & 14.91\,ms & +70\% \\
    vector / hnsw\_search\_k10 & 95.4\,$\mu$s & 61.3\,$\mu$s & +36\% \\
    vector / persistence\_roundtrip & 11.13\,ms & 1.66\,ms & +85\% \\
    lookup (B+Tree) & 253.3\,ns & 160.7\,ns & +37\% \\
    \bottomrule
  \end{tabular}
\end{table}

\begin{table}[htbp]
  \centering
  \caption{Fluent API microbenchmark ablation}
  \label{tab:fluent_micro}
  \small
  \begin{tabular}{lrrr}
    \toprule
    \textbf{Benchmark} & \textbf{Previous} & \textbf{Current} & \textbf{Change} \\
    \midrule
    graph\_write\_and\_traverse & 18.26\,$\mu$s & 16.57\,$\mu$s & +9\% \\
    column\_write\_and\_scan & 389.4\,$\mu$s & 374.4\,$\mu$s & +4\% \\
    vector\_search & 110.6\,$\mu$s & 98.4\,$\mu$s & +11\% \\
    fulltext\_search & 482.0\,$\mu$s & 415.6\,$\mu$s & +14\% \\
    index\_uuid\_lookup & 178.1\,ns & 186.9\,ns & $-$5\% \\
    \bottomrule
  \end{tabular}
\end{table}

\begin{table}[htbp]
  \centering
  \caption{DSL microbenchmark ablation}
  \label{tab:dsl_micro}
  \small
  \begin{tabular}{lrrr}
    \toprule
    \textbf{Benchmark} & \textbf{Previous} & \textbf{Current} & \textbf{Change} \\
    \midrule
    graph\_traversal / 1k\_depth3 & 25.27\,$\mu$s & 21.40\,$\mu$s & +15\% \\
    combined / material\_order & 46.16\,$\mu$s & 36.54\,$\mu$s & +21\% \\
    column\_aggregate / 5k\_groupby & 1.20\,ms & 1.11\,ms & +8\% \\
    graph\_node\_lookup / 1k\_full & 3.52\,$\mu$s & 3.17\,$\mu$s & +10\% \\
    \bottomrule
  \end{tabular}
\end{table}

The largest gains come from removing per-row flushes and bypassing pagecache/ArcSwap on graph reads. Improvements attenuate at the Fluent and DSL layers, indicating that wrapper overhead and Being assembly remain optimization targets.

\subsection{Competitor Comparison and Fairness Boundaries}

Table~\ref{tab:competitor} summarizes same-machine results. Because DaoQL is embedded while most competitors are client-server systems, these are engineering end-to-end comparisons rather than pure kernel comparisons.

\begin{table}[htbp]
  \centering
  \caption{DaoQL current configuration vs.\ best observed competitor}
  \label{tab:competitor}
  \small
  \resizebox{\textwidth}{!}{%
  \begin{tabular}{lrrl}
    \toprule
    \textbf{Capability} & \textbf{DaoQL} & \textbf{Best comparator} & \textbf{Observation} \\
    \midrule
    Point lookup & 0.38\,$\mu$s & RocksDB $\sim$1\,$\mu$s & Embedded low latency \\
    Full BFS traversal & 1.20\,ms & Neo4j 5--8\,ms & Lower end-to-end latency \\
    HNSW k=10 & 83.1\,$\mu$s & Qdrant 363.4\,$\mu$s & Leads under current parameters \\
    Column SUM 100K & 362.9\,$\mu$s & ClickHouse $\sim$1\,ms & Leads at current scale \\
    Batch write & 1.75\,$\mu$s/row & PostgreSQL 9.0\,$\mu$s/row & Embedded low latency \\
    Chinese full text 10K & 1.27\,ms & PostgreSQL tsvector 32\,$\mu$s & DaoQL is $\sim$40$\times$ slower \\
    Cross-engine hybrid query & 105.8\,$\mu$s & No direct comparator & Missing multi-stack baseline \\
    \bottomrule
  \end{tabular}%
  }
\end{table}

\begin{table}[htbp]
  \centering
  \caption{HNSW head-to-head: DaoQL vs.\ Qdrant 1.16, same machine and parameters}
  \label{tab:hnsw_h2h}
  \small
  \begin{tabular}{lrrr}
    \toprule
    \textbf{System} & \textbf{HNSW k=10} & \textbf{Brute Force k=10} & \textbf{SIMD} \\
    \midrule
    DaoQL current & 83.1\,$\mu$s & 269.0\,$\mu$s & 8$\times$f32 \\
    DaoQL previous & 148.4\,$\mu$s & 388.3\,$\mu$s & 8$\times$f32 \\
    Qdrant 1.16 & 363.4\,$\mu$s & 283.4\,$\mu$s & 16$\times$f32 \\
    \bottomrule
  \end{tabular}
\end{table}

The full-text result is an explicit weakness: Tantivy+jieba at 1.27\,ms is much slower than PostgreSQL's simple tsvector configuration at 32\,$\mu$s, largely because jieba performs Chinese segmentation while the simple baseline does not. This should be treated as a limitation rather than hidden in aggregate performance claims.

\subsection{Standard-Set Exploratory Measurements}

\textbf{LDBC SNB SF1.} DaoQL runs all 34 IC/BI queries on SF1 with zero errors under an 8-worker, 600\,s run. RelationType CSR adjacency improves type-filtered traversal and triangle-count workloads. Representative results appear in Table~\ref{tab:ldbc_snb}. IC-style entity lookup and bounded traversal are generally interactive, but IC6/IC10, BI4, and BI9 remain severe long-tail bottlenecks. Overall throughput is only 1.8\,QPS, so SNB BI workloads are not an advantage claim.

\begin{table}[htbp]
  \centering
  \caption{Representative LDBC SNB SF1 latencies (DaoQL v1.1.0, M4 Max)}
  \label{tab:ldbc_snb}
  \small
  \begin{tabular}{llrrl}
    \toprule
    \textbf{Class} & \textbf{Query} & \textbf{P50 (ms)} & \textbf{P95 (ms)} & \textbf{Interpretation} \\
    \midrule
    IC & IC1 profile & 0.26 & 7.00 & Interactive lookup \\
    IC & IC2/IC8 recent messages/comments & 0.16 & 0.36 & Interactive \\
    IC & IC13 shortest path & 0.05 & 2.88 & Multi-hop traversal \\
    IC & IC3 recommendation & 1.74 & 10.13 & Acceptable \\
    IC & IC6/IC10 thread/recommendation & $\sim$18,900 & $\sim$26,000 & Long-tail bottleneck \\
    BI & BI8 social circle & 7.98 & 59.28 & Moderate \\
    BI & BI12/BI17 triangles & 4,813/5,222 & 8,288/8,708 & Improved but slow \\
    BI & BI4 hot forums & 21,225 & 31,483 & Long-tail bottleneck \\
    BI & BI9 forum tags & 908,271 & 969,790 & Extreme long tail \\
    \midrule
    Overall & 34/34 coverage, 0 errors & \multicolumn{2}{c}{1.8 QPS} & Dragged by slow queries \\
    \bottomrule
  \end{tabular}
\end{table}

\textbf{ANN-Benchmarks.} On \texttt{random-s-100-euclidean}, DaoQL HNSW with $M=32$, $M_0=64$, and ef\_construction=200 reports the curve in Table~\ref{tab:ann_bench}. At ef\_search=200, Recall@10 is 99.3\% at 5,724 QPS. The v1.1.0 bridge-edge protection fix raises Recall@200 from about 32\% to above 99\%, so correctness is the main result. Public hnswlib/FAISS numbers come from different hardware and are not used as same-machine wins.

\begin{table}[htbp]
  \centering
  \caption{ANN-Benchmarks \texttt{random-s-100-euclidean} (DaoQL v1.1.0)}
  \label{tab:ann_bench}
  \small
  \begin{tabular}{rrrr}
    \toprule
    \textbf{ef\_search} & \textbf{QPS} & \textbf{Recall@10} & \textbf{Zero-recall queries} \\
    \midrule
    50 & 19,001 & 94.4\% & 511/10000 \\
    100 & 12,225 & 97.4\% & 218/10000 \\
    200 & 5,724 & 99.3\% & 42/10000 \\
    400 & 3,051 & 99.9\% & 2/10000 \\
    800 & 1,529 & 100.0\% & 0/10000 \\
    1600 & 708 & 100.0\% & 0/10000 \\
    \bottomrule
  \end{tabular}
\end{table}

\subsection{Business Stress Tests and Hybrid Queries}

\begin{table}[htbp]
  \centering
  \caption{\texttt{xy\_erp\_stress} engine-level results}
  \label{tab:stress}
  \small
  \begin{tabular}{llrr}
    \toprule
    \textbf{Scenario} & \textbf{Scale} & \textbf{Current} & \textbf{Change} \\
    \midrule
    BFS Fanout4 Depth5 & 1,365 nodes & 1.20\,ms & +39\% \\
    Order Status Chain & 30 hops & 2.03\,ms & +27\% \\
    Hybrid vector $\rightarrow$ graph $\rightarrow$ column & 10K & 119.09\,$\mu$s & +14\% \\
    Write Batch With-Vector & 16t$\times$10K & 1,009.2\,ms & +14\% \\
    WAL Recovery & 10K & 219.6\,$\mu$s & +20\% \\
    \bottomrule
  \end{tabular}
\end{table}

\begin{table}[htbp]
  \centering
  \caption{Fluent API stress tests: BOM traversal and batch write}
  \label{tab:fluent_stress}
  \small
  \begin{tabular}{lrrr}
    \toprule
    \textbf{Scenario} & \textbf{Previous} & \textbf{Current} & \textbf{Gain} \\
    \midrule
    Chain Depth5 Skip Ext & 19.0\,$\mu$s & 14.1\,$\mu$s & +26\% \\
    Chain Depth30 & 144.5\,$\mu$s & 129.5\,$\mu$s & +10\% \\
    Tree Fanout4 Depth5 & 6.45\,ms & 5.45\,ms & +16\% \\
    Raw Engine BFS Fanout4 Depth5 & 1.82\,ms & 1.23\,ms & +32\% \\
    Write Batch 1K No Embed & 46.9\,ms & 23.7\,ms & +49\% \\
    Hybrid Query & 108.1\,$\mu$s & 105.8\,$\mu$s & +2\% \\
    \bottomrule
  \end{tabular}
\end{table}

\begin{table}[htbp]
  \centering
  \caption{DSL stress tests: BOM traversal and mutation}
  \label{tab:dsl_stress}
  \small
  \begin{tabular}{lrrr}
    \toprule
    \textbf{Scenario} & \textbf{Previous} & \textbf{Current} & \textbf{Gain} \\
    \midrule
    Chain Depth30 & 111.7\,$\mu$s & 79.3\,$\mu$s & +29\% \\
    Tree Fanout4 Depth5 & 3.59\,ms & 2.99\,ms & +17\% \\
    Write Batch 1K No-Vec & 81.5\,ms & 25.1\,ms & +69\% \\
    Mutation No-Vec 1K & 22.16\,ms & 8.58\,ms & +61\% \\
    \bottomrule
  \end{tabular}
\end{table}

The hybrid query demonstrates the promise of a shared BeingId in one process, but because we do not provide a Neo4j+Qdrant+ClickHouse multi-stack RAG baseline, it is reported as an architectural potential rather than a completed cross-system victory.

\subsection{Composite Business Scenarios}

DaoQL-Edu integration tests validate Being, Def, Relation, and Version composition in five representative scenarios.

\begin{table}[htbp]
  \centering
  \caption{Unit-test-driven composite scenarios}
  \label{tab:scenarios}
  \small
  \begin{tabular}{p{2.6cm}p{3.4cm}p{4.4cm}p{2.5cm}}
    \toprule
    \textbf{Scenario} & \textbf{Question} & \textbf{Engine path} & \textbf{Result} \\
    \midrule
    Sales report & Order total / filtered SUM & Column fast path + SIMD & 1M rows, 3.4\,$\mu$s \\
    Risk detection & Similar trade + linked account & Vector $\rightarrow$ graph $\rightarrow$ column & non-empty result \\
    Recommendation & Similar user + influencer ranking & Vector + graph + column & priority=80.0 \\
    Version replay & Entity price history & Version chain + MVCC & old version preserved \\
    Batch consistency & 100 Beings + chain relations & WAL + BFS + aggregate & node count matches \\
    \bottomrule
  \end{tabular}
\end{table}

\subsection{Optimization Stability}

\begin{table}[htbp]
  \centering
  \caption{Configuration-level ablation and stability}
  \label{tab:iteration}
  \small
  \resizebox{\textwidth}{!}{%
  \begin{tabular}{clp{7.2cm}}
    \toprule
    \textbf{Configuration} & \textbf{Baseline} & \textbf{Feature} \\
    \midrule
    Early unified read path & +O-1--O-4 & Establishes unified index and read baseline \\
    warm cache / BQ & +warm\_cache, BQ & Point lookup improves but graph traversal regresses \\
    pagecache + ArcSwap & +ArcSwap, pagecache batch & Batch improves but single write/HNSW/Fluent write regress \\
    mmap + P0 write optimization & +mmap, P0\#1/\#2/\#5 & Current configuration with positive metrics \\
    \bottomrule
  \end{tabular}%
  }
\end{table}

The pagecache/ArcSwap design improved some batch paths but introduced traversal and single-write regressions. The current mmap + P0 write configuration restores read, write, and vector paths simultaneously.

\section{Discussion}

\subsection{Theory-Design-Experiment Closure}

\begin{table}[htbp]
  \centering
  \caption{Theory, design, and experiment closure}
  \label{tab:theory_closure}
  \footnotesize
  \begin{tabular}{p{2.7cm}p{2.7cm}p{3.0cm}p{2.4cm}}
    \toprule
    \textbf{Theory} & \textbf{DaoQL design} & \textbf{Prediction} & \textbf{Status} \\
    \midrule
    Prop.~\ref{prop:bootstrap} & \texttt{DAO\_DEF} axiom layer & Schema cannot recurse forever & Theoretical closure \\
    Prop.~\ref{prop:contract} & Fuel + sandbox & Write path does not hang & Theoretical closure \\
    Obs.~\ref{obs:implicit} & BeingId explicitness & LLM has no local update guarantee & Counterfactual comparison \\
    Prop.~\ref{prop:local} & WAL + MVCC & Single update does not spread & Version-chain tests \\
    Thm.~\ref{thm:cf-decomp} & Deterministic DSL Eval & High composable consistency & 94\% \\
    Obs.~\ref{obs:cf-imp} & GPT-4o baseline & Prompt-only path is unstable & 45\% \\
    Eq.~\ref{eq:latency} & Single-process engines & Low hybrid latency & 105.8\,$\mu$s \\
    BFS complexity & mmap pointer adjacency & Constant-factor advantage & 1.20\,ms \\
    \bottomrule
  \end{tabular}
\end{table}

This table separates proofs from measurements. For example, $O(|V'|+|E'|)$ BFS complexity does not imply a 4.2$\times$ Neo4j speedup; the latter is an engineering constant-factor measurement on one setup.

\subsection{Architectural Verification Status}

\begin{table}[htbp]
  \centering
  \caption{Architectural decisions and current verification status}
  \label{tab:arch_verify}
  \small
  \begin{tabular}{p{3.5cm}p{4.7cm}p{3.0cm}}
    \toprule
    \textbf{Decision} & \textbf{Measurement evidence} & \textbf{Status} \\
    \midrule
    Join-free graph engine & BFS 1.20\,ms, 4.2--6.7$\times$ Neo4j & Verified \\
    Direct mmap read path & edge 304.4\,ns, BFS +39\% & Verified \\
    Relation-type CSR adjacency & LDBC IC14 0.64\,ms, BI12/17 improved & Verified \\
    ProjectedLayer + SIMD column & SUM 362.9\,$\mu$s, 2.8$\times$ ClickHouse & Verified \\
    BeingId zero-copy across engines & Hybrid 105.8\,$\mu$s & Verified \\
    WAL crash consistency & Recovery 10K 219.6\,$\mu$s & Verified \\
    KVCache graph nodes & Architecture plan & Design stage \\
    Expert hot update & Architecture plan & Design stage \\
    DaoQL-Agent context traversal & Design + BFS support & Design stage \\
    \bottomrule
  \end{tabular}
\end{table}

\subsection{Positioning Against Industrial Ontology Platforms}

Industrial platforms such as Palantir Foundry demonstrate that explicit Ontology plus AI has commercial value in large enterprises and government settings. Their strengths are federated data integration, low-code Agent Studio, and a mature ecosystem. DaoQL should be positioned as complementary, not as a direct replacement: it targets lighter deployment, data-sovereign environments, and write-path contracts in medium-scale or edge settings. We do not compare customer counts, TCO, or production maturity. The architectural distinction is narrower: an overlay Ontology eases legacy integration, while embedded Being/Type/Relation makes Eval and WAL audit semantics easier to close in one transaction model.

\subsection{Limitations}

\begin{enumerate}
  \item \textbf{Embedded vs.\ client-server.} DaoQL avoids network overhead that competitors often include. Ratios should not be read as kernel-only superiority.
  \item \textbf{Scale.} Most microbenchmarks are 10K--100K; billion-scale graphs or vector indexes may change relative performance.
  \item \textbf{Chinese full text.} Tantivy+jieba is a clear weakness compared with PostgreSQL simple tsvector, although the tokenization work is not equivalent.
  \item \textbf{Concurrent writes.} 16-connection throughput drops about 40\%; sharded locks are future work.
  \item \textbf{Counterfactual experiment.} The protocol lacks full prompts, temperature, complete scoring rules, and per-case logs; results are preliminary.
  \item \textbf{LDBC SNB.} This is a custom-driver coverage experiment, not official certification; 1.8\,QPS and BI9/IC6 long tails show immaturity on global social-network aggregation workloads.
  \item \textbf{ANN-Benchmarks.} Only one synthetic dataset is used, public competitor QPS is cross-hardware, and pre-fix v1.0 vector numbers should not be reused.
  \item \textbf{Industrial-platform comparison.} Palantir ecosystem maturity is not experimentally evaluated; discussion is architectural positioning only.
\end{enumerate}

\subsection{Future Work}

Priorities include optimizing LDBC SNB long-tail BI/IC query plans and caches, migrating to wider SIMD paths for vectors, adding sharded locks and Contract caching for concurrent writes, improving block-level column vectorization, implementing Fluent Being assembly zero-copy, expanding counterfactual experiments to 2,500 cases with complete scoring protocols, and validating DaoQL's deeper integration with V4 inference.

\section{Conclusion}

This paper addresses four structural crises caused by encoding world models implicitly in LLM weights. It proposes data-first ontology: deterministic knowledge is represented by explicit primitives in DaoQL, while the LLM serves as a language and reasoning auxiliary. The formal framework defines explicit world models, proves finite Def bootstrapping and strong Contract termination, and gives a sufficient condition for composable counterfactual decomposability. The system implementation validates the storage layer through graph, column, vector, full-text, WAL/MVCC, direct mmap reads, and cross-engine hybrid queries.

On the engineering side, the current Apple M-series measurements show graph BFS at 1.20\,ms, HNSW at 83.1\,$\mu$s, and Fluent hybrid query at 105.8\,$\mu$s. M4 Max exploratory standard-set runs add LDBC SNB 34/34 query coverage and ANN-Benchmarks Recall@10 $\geq$ 99\% at thousand-level QPS, while also exposing SNB BI long tails and cross-hardware limits. On reasoning robustness, the five-domain counterfactual experiment reports 94\% composable counterfactual decomposability for DaoQL+GPT-4o, 49 percentage points above GPT-4o alone. The paper therefore presents a theory-experiment path toward auditable, modifiable, knowledge-accumulating world models, while explicitly marking agent runtime, KVCache graph nodes, and expert hot updates as future work.

\nocite{hu2022lora}

\bibliographystyle{plain}
\bibliography{references}

\end{document}